
\documentclass[letterpaper, 10 pt, conference]{ieeeconf}  

\IEEEoverridecommandlockouts                              

\overrideIEEEmargins                                      



\usepackage{graphics} 
\usepackage{epsfig} 
\usepackage{times} 
\usepackage{amsmath} 
\usepackage{amssymb}  

\usepackage{amsthm}
\newtheorem{theorem}{Theorem}
\newtheorem{lemma}{Lemma}

\usepackage{multicol}
\usepackage{multirow}
\usepackage[ruled]{algorithm2e}

\usepackage{textcomp}
\usepackage{hyperref}
\usepackage{cleveref}

\usepackage[utf8]{inputenc}
\usepackage[T1]{fontenc}
\usepackage{xcolor}
\usepackage{float}
\usepackage{gensymb}
\usepackage{bm}
\usepackage{autobreak}
\usepackage{svg}
\usepackage{mathrsfs}
\usepackage{tabularx}
\usepackage{array, makecell}
\usepackage{graphicx}
\usepackage{cellspace}

\usepackage{comment}

\usepackage{soul}

\title{\LARGE \bf
Kinetostatic Path Planning for Continuum Robots By Sampling on Implicit Manifold
}

\author{Yifan Wang$^{1}$ and Yue Chen$^{2}$
\thanks{Research reported in this publication is    supported by the National Institute of Biomedical Imaging And Bioengineering of the National Institutes of Health under Award Number R01EB034359. The content is solely the responsibility of the authors and does not necessarily represent the official views of the National Institutes of Health.  Corresponding author: Yue Chen. }
\thanks{$^{1}$Yifan Wang is with the Department of Mechanical Engineering, 
        Georgia Institute of Technology, Atlanta 30332, USA
        {\tt\small wangyf@gatech.edu}}%
\thanks{$^{2}$Yue Chen is with the Department of Biomedical Engineering, 
        Georgia Institute of Technology, Atlanta 30332, USA
        {\tt\small yue.chen@bme.gatech.edu}}%
}

\begin{document}

\maketitle
\thispagestyle{plain}
\pagestyle{plain}

\begin{abstract}

Continuum robots (CR) offer excellent dexterity and compliance in contrast to rigid-link robots, making them suitable for navigating through, and interacting with, confined environments. However, the study of path planning for CRs while considering external elastic contact is limited. The challenge lies in the fact that CRs can have multiple possible configurations when in contact, rendering the forward kinematics not well-defined, and characterizing the set of feasible robot configurations as non-trivial. In this paper, we propose to solve this problem by performing quasi-static path planning on an implicit manifold. We model elastic obstacles as external potential fields and formulate the robot statics in the potential field as the extremal trajectory of an optimal control problem obtained by the first-order variational principle. We show that the set of stable robot configurations is a smooth manifold diffeomorphic to a submanifold embedded in the product space of the CR actuation and base internal wrench. We then propose to perform path planning on this manifold using AtlasRRT*, a sampling-based planner dedicated to planning on implicit manifolds. Simulations in different operation scenarios were conducted and the results show that the proposed planner outperforms Euclidean space planners in terms of success rate and computational efficiency.

\end{abstract}

\section{INTRODUCTION}

Continuum robots (CR) are continuously deformable structures with high dexterity and passive compliance, making them widely investigated for applications involving contact or interaction with the environment \cite{walker2013review,jessica_review}. To this end, theories of CR modeling under external loads have been widely studied, and it is well-established, both theoretically and experimentally, that the Cosserat rod model is accurate in describing the mechanics of a slender CR \cite{dupont2010CTR,rucker-ctr,rucker-tendon-driven,boyer2023OCP}. Control of CRs under external loads have also been studied, including stiffness modulation \cite{mahvash2011stiffness} and force/position hybrid control \cite{bajo2016motion/force}. While most of these studies focused on the scenario where the robot is only under a tip load, a few studies have considered multiple contacts along the robot body \cite{goldman2014compliant,zhang2019contact-control}. However, these studies have all focused on local optimization-based control. 

Conversely, global path planning methods for CRs have been developed for navigation through confined environments and avoidance of unstable configurations. A majority of these studies used sampling-based methods \cite{bergeles2013stable_path,wu2015centerline,kuntz2019CTR_point_cloud,hoelscher2021needle_planning,meng2022workspaceRRT}, while others leveraged heuristics, such as follow-the-leader motions \cite{mohammad2021follow-the-leader}. However, these methods avoid environmental contact, limiting their use from tasks that could exploit their compliance. A few works have considered contacts during planning. In \cite{li2016grasp}, point contacts are utilized to progressively generate a wrapping path for CR grasping. In \cite{greer2020growing_robot}, contacts facilitate path changes of a soft growing robot for navigation.
However, these studies used geometric approaches and lacked mechanics informed planning. Further, these studies focused on planar scenarios.

Path planning for CRs with contacts entails two major problems. First, when a CR is under external loads, its configuration (shape) is not fully determined by the actuation but also the load. This is illustrated in Fig. \ref{fig.example}, where the same set of CR actuation values results in different configurations when there is external contact. Therefore, unlike rigid-link robots, the mapping from actuation to configuration in CR is not well-defined, and plans in the actuation space may not cover all possible configurations. Second, planning directly in the configuration space is not trivial. The shape of a CR is described by a continuous curve, which belongs to a subset of an infinite-dimensional functional space. Although the shape can be approximated by functional basis interpolation to generate a finite-dimensional configuration space \cite{boyer2021strain_parameterization,sadati2022reduced-order}, it is non-trivial to check whether a configuration is achievable under environmental contact. Due to the above reasons, applying existing robot planning methods to CRs with contact is not straightforward and remains understudied.

\begin{figure}[t!]
    \centering
    \includegraphics[width = 0.5\linewidth]{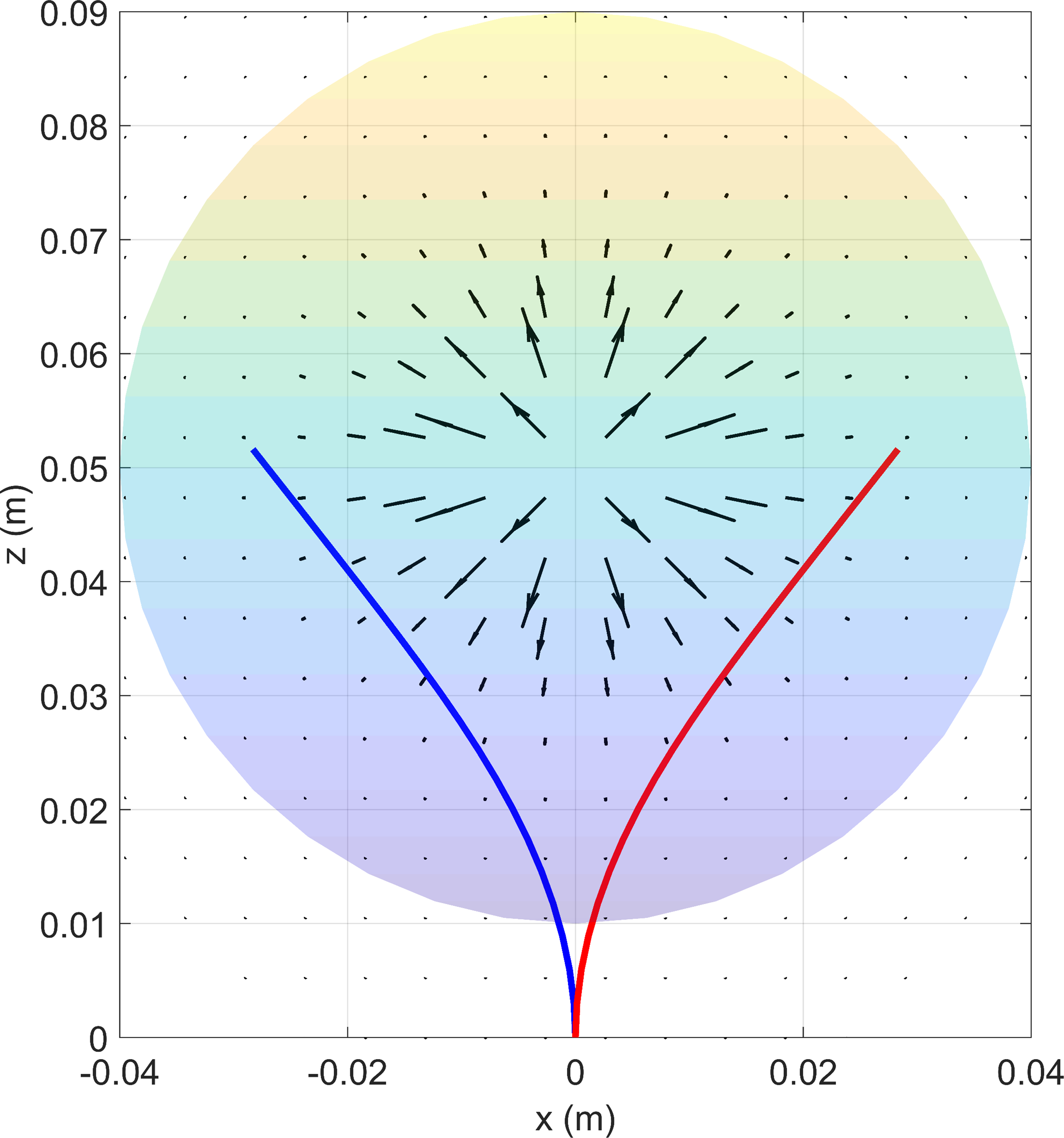}
    \vspace{-3mm}
    \caption{A CR with the same actuation values, but different configurations. The vector field represents a potential field exerting distributed forces.} 
    \label{fig.example}
    \vspace{-7mm}
\end{figure}

A widely studied problem that poses similar challenges is the manipulation of an elastic rod. In this problem, a rod is grasped at each end by separate robotic manipulators and the motion of the manipulators are planned to provide a desired rod shape. 
Similar to the CRs, the rod is described by a continuous curve that can have different configurations with the same end poses. In the  work by Bretl and McCarthy \cite{bretl2014kirchhoff}, it is shown that the set of stable rod configurations in free space is a smooth 6-dimensional manifold parameterized by a single global chart that corresponds to the internal wrench of the rod at one end. It is then straightforward to perform path planning in the Euclidean space of the internal wrench. In later works, it is shown that this is also true for rods in external smooth potential fields, such as the gravity field \cite{borum2018manipulation,wu2022plan_in_force}.

Inspired by the solution to the problem of rod manipulation planning, we propose a path planning method for CRs in environments with elastic objects. We model the contact forces between the robot and the objects as conservative forces generated by smooth potential fields. Following a derivation that is modified from \cite{bretl2014kirchhoff,borum2018manipulation}, we show that the set of stable configurations of the CR in a given potential field is a smooth manifold that is diffeomorphic to a submanifold embedded in the product space of the CR actuation and backbone internal wrench at the base. This submanifold is implicitly defined and its dimension equals the dimension of the actuation space, i.e. degrees of freedom (DoF). We then propose to perform path planning on this manifold using a variant of Rapid-exploring Random Tree (RRT) called AtlasRRT* \cite{jaillet2012atlasRRTstar}, which is designed to conduct sampling-based planning on implicitly defined manifolds. Simulations were conducted to compare the proposed method against RRT* in the actuation space and in the Euclidean product space of the CR actuation and base internal wrench.

This paper is organized as follows: Sec. \ref{sec_preliminaries} provides an overview of the differential geometry and optimal control theories we used in our modeling and analysis. The robot mechanics model and our proof of the aforementioned result are presented in Sec. \ref{sec_mechanics}. Sec. \ref{sec_planning} provides details of the planning algorithm we used. The simulation results are presented in Sec. \ref{sec_results}, followed by the conclusion in Sec. \ref{sec_conclusions}. 


\section{Preliminaries} \label{sec_preliminaries}

To model CRs inside external potential fields, we utilize  optimal control theory, which has demonstrated the Cosserat rod model from an energy perspective and also provides stability conditions \cite{boyer2023OCP,ha2016stability}. 
We then analyze the geometric structure of the obtained model.
In this section, we briefly introduce the theoretical background for the modeling and analysis later. More details can be found in \cite{lee_introduction_2012,marsden_introduction_1999}.

\subsection{Differential Geometry}

Let $\mathcal{M}$ be a topological $n$-manifold, a chart on $\mathcal{M}$ is a pair $(U,\varphi)$, where $U$ is an open subset of $\mathcal{M}$ and $\varphi:~U\rightarrow \varphi(U)\in \mathbb{R}^n$ is a homeomorphism that maps $U$ to an open subset of $\mathbb{R}^n$.
An atlas of $\mathcal{M}$ is a collection of charts $\{(U_i,\varphi_i)\}$ such that $\{U_i\}$ is a cover of $\mathcal{M}$.

A diffeomorphism is a smooth map that is bijective and has a smooth inverse. Two charts $(U_i,\varphi_i)$ and $(U_j,\varphi_j)$ are smoothly compatible if either $U_i\bigcap U_j = \emptyset$ or the composed map $\varphi_j\circ\varphi_i^{-1}:~\varphi_i(U_i\bigcap U_j)\rightarrow\varphi_j(U_i\bigcap U_j)$ is a diffeomorphism. An atlas is called a smooth atlas if any pair of charts in it are smoothly compatible. A smooth atlas is called a smooth structure on $\mathcal{M}$ if any chart that is smoothly compatible with charts in the atlas is also included in the atlas. If $\mathcal{M}$ is equipped with a smooth structure, it is a smooth manifold.

Diffeomorphisms between smooth manifolds preserve smooth structures. Let $\mathcal{M}$ and $\mathcal{N}$ be smooth manifolds and define a map $f:~\mathcal{M}\rightarrow \mathcal{N}$. $f$ is smooth if $\psi_j\circ f\circ\varphi_i$ is a smooth map for all charts $(U_i,\varphi_i)$ on $\mathcal{M}$ and $(V_j,\psi_j)$ on $\mathcal{N}$. If $f$ is a diffeomorphism, then it transports the smooth structure $\{(U_i,\varphi_i)\}$ on $\mathcal{M}$ to a smooth structure $\{(f(U_i),\varphi_i\circ f^{-1})\}$ on $\mathcal{N}$. Indeed, for any pair of such charts on $\mathcal{N}$ with non-empty intersection, $\varphi_j\circ f^{-1} \circ (\varphi_i\circ f^{-1})^{-1} = \varphi_j\circ\varphi_i^{-1}$ is a diffeomorphism. 

\subsection{Lie Group}

A Lie group is a group that is also a smooth manifold, where the group multiplication and inversion are smooth maps. Its corresponding Lie algebra is the tangent space of the manifold at the identity element of the group. Let $g(s)$ be a trajectory on the Lie group $SE(3)$ parameterized by $s$, its body velocity $\xi = [\omega^T~v^T]^T \in \mathbb{R}^6$, where $\omega$ is the angular velocity and $v$ is the linear velocity. The corresponding vector in the Lie algebra $\mathfrak{se}(3)$ is $\Hat{\xi}$, which in local coordinates writes
\begin{equation}
    \Hat{\xi} = \begin{bmatrix}
        \Hat{\omega} & v \\ 0_{1\times3} & 0
    \end{bmatrix},~\Hat{w} = \begin{bmatrix}
        0 & -\omega_3 & \omega_2\\
        \omega_3 & 0 & -\omega_1\\
        -\omega_2 & \omega_1 & 0
    \end{bmatrix}
\end{equation}
Here, the $\Hat{\cdot}$ operator is abused to also represent $\mathbb{R}^3\rightarrow\mathfrak{so}(3)$. It follows that $g^\prime = g\Hat{\xi}$, where $g^\prime$ is the derivative of $g$ w.r.t. $s$.
Now, consider a $\delta$-variation of $g$ over a parameter independent of $s$, we have $\delta g = g\delta\Hat{\zeta}$, where $\delta\zeta$ represents the body twist of the variation. Due to the independence of parameters, the commutation relation $\delta(g^\prime) = (\delta g)^\prime$ holds, which gives \cite{marsden_introduction_1999}
\begin{equation}\label{twist_variation}
    \delta \xi = \delta \zeta^\prime + ad_{\xi}\delta\zeta,~ad_{\xi} = \begin{bmatrix}
        \Hat{w} & 0_{3\times3}\\
        \Hat{v} & \Hat{w}
    \end{bmatrix}
\end{equation}

\section{Robot Mechanics and Geometric Analysis} \label{sec_mechanics}

In this section, we apply optimal control theory to model slender CRs and present our main results based on the model.

\subsection{Mechanics Model}

Consider the robot backbone to be an elastic rod. For each cross section of the robot backbone, a material coordinate frame is attached such that its $XY$-plane coincides with the cross section and its $Z$-axis points along the direction of increasing $s$. The material frame is described by $g \in SE(3)$, and its differential kinematics as it moves along the robot reference arc-length $s$ is given by $g^\prime = g\Hat{\xi}$, where  $\xi\in\mathbb{R}^6$ is the body twist of $g$ representing the material strain. 

We assume that all internal and external forces of the robot are conservative, such that they are generated by their respective potential fields. Assuming that the backbone has natural body twist $\xi_0$ and the strains of the backbone are small, its internal elastic potential energy is $\frac{1}{2}\int_0^{l}\epsilon^TC\epsilon\mathrm{d}s$, where $l$ is the original length of the robot, $\epsilon = \xi- \xi_0$ is the material strain, and $C = \mathrm{diag}(EI, EI, GJ, GA, GA, EA)$ is the matrix of stiffness. This potential energy induces an internal wrench $C\epsilon$ that tends to restore the natural shape of the backbone. Adopting the notation in \cite{boyer2023OCP}, the wrench generated by the internal actuating mechanism of the robot to prevent the shape restoration is denoted by $\Lambda_{ad}$, which has the corresponding internal potential field $\epsilon^T\Lambda_{ad}$. We then model the elastic obstacles in the environment as a smooth potential field $\mathcal{U}: SE(3)\rightarrow\mathbb{R}$, such that it induces an external wrench $W$ at $g$ satisfying
\vspace{-1mm}
\begin{equation}
    \delta\zeta^TW(g) = -\delta \mathcal{U}|_g
    \vspace{-2mm}
\end{equation}
We also model any concentrated external tip wrench $W_+$ as generated by a smooth potential field $\mathcal{U}^+: SE(3)\rightarrow\mathbb{R}$.

By the principle of minimum potential energy, a quasi-static configuration of the robot is an extremal of the following optimal control problem with the state trajectory $g(s)$ in the Lie group $SE(3)$ and parameterized by $s$:
\begin{equation}
    \begin{aligned}  \min_{u}\int_0^{l}&(\frac{1}{2}u^TCu + u^T\Lambda_{ad} + \mathcal{U}(g))\mathrm{d}s+\mathcal{U}^+(g(l))\\ 
    \mathrm{s.t.}~g^\prime &= g\Hat{\xi} = g(u + \xi_0)^{\wedge},~
    g(0) = e
    \end{aligned}
    \vspace{-1mm}
\end{equation}
where the cost is the total potential energy, the control is the strain $\epsilon = u$, and $e$ is the identity element of $SE(3)$. 

In \cite{bretl2014kirchhoff}, Lie-Poisson reduction was performed to reduce the extremal trajectory to a curve on the dual Lie algebra $\mathfrak{se}^*(3)$, removing the dependence on $g$, since the Hamiltonian for their problem is left-invariant on $SE(3)$. However, as discussed in \cite{borum2018manipulation}, potential fields that are dependent on $g$ render the Hamiltonian non-left-invariant, and the trajectory cannot be reduced to a lower dimension. We thus directly apply the principle of variational calculus as in \cite{boyer2023OCP}. Reducing the system dynamics constraint to Lie algebra as $\xi = u + \xi_0$, we arrive at the augmented cost functional
\vspace{-1mm}
\begin{equation}
    \int_0^{l}(\frac{1}{2}u^TCu + u^T\Lambda_{ad} + \mathcal{U} + (\xi - u - \xi_0)^T\Lambda)\mathrm{d}s+\mathcal{U}^+
    \vspace{-1mm}
\end{equation}
where $\Lambda$ is the vector of Lagrange multipliers. The first-order variation of the cost functional caused by the variation $\delta u$ is 0 for an extremal trajectory:
\begin{equation}
    \int_0^{l}(\delta u^T(Cu + \Lambda_{ad}) + \delta\mathcal{U} + (\delta\xi - \delta u)^T\Lambda)\mathrm{d}s+\delta\mathcal{U}^+ = 0
\end{equation}
where $\delta\xi$ comes from the variation in $g$ that is given by (\ref{twist_variation}). Substituting (\ref{twist_variation}) into the above equation yields
\vspace{-1mm}
\begin{equation}
\begin{aligned}
    \int_0^{l}(\delta u^T&(Cu + \Lambda_{ad}) + \delta\mathcal{U})\mathrm{d}s + \\
    &\int_0^{l}(\delta\zeta^{\prime T} + \delta\zeta^T ad_{\xi}^T - \delta u)^T\Lambda\mathrm{d}s+\delta\mathcal{U}^+ = 0
\end{aligned}
\vspace{-1mm}
\end{equation}
Integrating by parts to eliminate $\delta\zeta^\prime$ yields
\begin{equation}
\begin{aligned}
    \int_0^{l}&(\delta u^T(Cu + \Lambda_{ad} - \Lambda) - \delta\zeta^T(\Lambda^{\prime} - ad_{\xi}^T\Lambda + W))\mathrm{d}s\\
    & +\delta\zeta^T(l)(\Lambda(l)-W_+(g(l))) = 0
\end{aligned}
\end{equation}

Making the coefficients of $\delta\zeta$ vanish and combining the system dynamics constraint, we arrive at a system of ordinary differential equation (ODE)
\begin{equation}\label{canonical_eqs}
    \begin{aligned}
        g^\prime &= g(\xi_0 + u)^\wedge\\
        \Lambda^\prime &= ad_{\xi}^T\Lambda - W
    \end{aligned}
\end{equation}

Making the coefficients of $\delta u$ vanish gives
\vspace{-1mm}
\begin{equation}\label{constitutive_law}
    \Lambda =  Cu +  \Lambda_{ad}
    \vspace{-1mm}
\end{equation}
Note that for a slender CR with a small radius and small strains, the dependence of $\Lambda_{ad}$ on $u$ is negligible as remarked in \cite{boyer2023OCP}, therefore (\ref{constitutive_law}) is still valid.

Finally, the boundary condition is given by setting the coefficients of $\delta\zeta(l)$ to 0:
\vspace{-1mm}
\begin{equation}\label{B.C.}
    \Lambda(l) - W_+(g(l)) = 0
    \vspace{-1mm}
\end{equation}

Equations (\ref{canonical_eqs})-(\ref{B.C.}) form a boundary value problem (BVP) that describes the CR mechanics. We note that (\ref{canonical_eqs}) is parameterized by the actuation variables $\tau$ via the actuation wrench $\Lambda_{ad}(\tau)$, which we assume to be a smooth map. To solve this BVP, we use the direct shooting method to solve for the unknown initial condition $\lambda := C\epsilon(0)$, which is the backbone internal wrench at the robot base.

\subsection{Geometric Analysis}

We show that the set of stable configurations of the robot is a smooth manifold. Let $C^\infty([0,l],SE(3)\times U)$ be the set of all smooth maps $(g,u):~[0,l]\rightarrow SE(3)\times U$ under the smooth topology. Let $\mathcal{C}\subset C^\infty([0,l],SE(3)\times U)$ be the subset of all $(g,u)$ that satisfy the BVP (\ref{canonical_eqs}) - (\ref{B.C.}). Since the autonomous ODEs (\ref{canonical_eqs}) are smooth in $\lambda$, $g$, and parameter $\tau$, any $(g,u)\in\mathcal{C}$ is uniquely and smoothly determined by the choice of $(\lambda,\tau)$, as a result of the existence and uniqueness theorem of IVPs and Theorem 4.1 in chapter 5 of \cite{hartman2002ode}. The resulting smooth maps are denoted by
\begin{equation}
    \Psi(\lambda,\tau) = (g,u),~\Gamma(g,u) = \Lambda(l)
\end{equation}
The admissible set of $(\lambda,\tau)$ can then be characterized as
\begin{equation}
    \mathcal{A} = \{(\lambda,\tau)~|~F(\lambda,\tau)=0\}
\end{equation}
where $F:~\mathbb{R}^{6}\times\mathbb{R}^{n}\rightarrow\mathbb{R}^6$ represents the boundary condition
\begin{equation}
    F(\lambda,\tau)=\Gamma\circ\Psi(\lambda,\tau)- W_+(g(l))
\end{equation}

We first examine the Jacobian of $F$. Suppose $\mathrm{rank}(F_\lambda) < 6$, then there exists a first-order perturbation $\delta\lambda \in \mathrm{ker}(F_\lambda)$ such that $\delta F = F_\lambda\delta\lambda = 0$. This means that for a fixed $\tau$, there exist extremal configurations within an arbitrarily small neighborhood of $\lambda$. This corresponds to marginally stable configurations of the robot, where the robot would move to another stable configuration after a small perturbation. Indeed, since there can be multiple different $\lambda$ corresponding to a single $\tau$ when the robot is in a potential field, there can be bifurcation points where the mapping from $\tau$ to $\lambda$ becomes not one-to-one. For the following proof, define
\begin{equation}
    \Bar{\mathcal{A}} = \{(\lambda,\tau)~|~F(\lambda,\tau)=0,~\mathrm{rank}(F_\lambda(\lambda,\tau)) = 6\}
\end{equation}

 \begin{lemma}
     $\Bar{\mathcal{A}}$ is a $n$-dimensional smooth manifold.
 \end{lemma}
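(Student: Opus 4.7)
The plan is to realize $\bar{\mathcal{A}}$ as a regular level set of a smooth submersion, and then invoke the regular value theorem (the submersion level-set theorem) to obtain both the smooth manifold structure and the dimension count $n$.

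First I would verify that $F:\mathbb{R}^{6}\times\mathbb{R}^{n}\to\mathbb{R}^{6}$ is smooth. This is immediate from the preceding paragraph: $\Psi$ is smooth by smooth dependence of IVP solutions on initial data and parameters (the theorem from Hartman already cited), $\Gamma$ is smooth as an evaluation at $s=l$ of the smooth trajectory $\Lambda$, and $W_+$ is smooth by assumption. Hence the Jacobian blocks $F_\lambda$ (a $6\times 6$ matrix) and $F_\tau$ (a $6\times n$ matrix) depend smoothly on $(\lambda,\tau)$.

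Next I would observe that the rank condition defining $\bar{\mathcal{A}}$ is open. Because $F_\lambda$ is square, $\mathrm{rank}(F_\lambda)=6$ is equivalent to $\det F_\lambda \neq 0$, and by continuity of the determinant this cuts out an open subset $V \subset \mathbb{R}^{6}\times\mathbb{R}^{n}$. By the definition in the paper, $\bar{\mathcal{A}} = F^{-1}(0)\cap V$. Restricting to $V$ now lets me apply the level-set machinery cleanly: on $V$ the full differential $DF = [\,F_\lambda \mid F_\tau\,]$ has its first block of rank $6$, so $DF$ itself has rank $6$ everywhere on $V$. Thus $F|_V$ is a smooth submersion and $0\in\mathbb{R}^{6}$ is a regular value of $F|_V$. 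By the regular level-set theorem, $\bar{\mathcal{A}} = (F|_V)^{-1}(0)$ is an embedded smooth submanifold of $V$, and hence of $\mathbb{R}^{6}\times\mathbb{R}^{n}$, of codimension $6$. Its dimension is therefore $(6+n)-6 = n$, as required.

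The only real subtlety — essentially a bookkeeping point — is that $\bar{\mathcal{A}}$ is not presented as a bare preimage of a value of $F$ but as the intersection of $F^{-1}(0)$ with the rank locus; without the restriction to $V$, $0$ need not be a regular value of $F$ on all of $\mathbb{R}^{6}\times\mathbb{R}^{n}$, because bifurcation points (where $F_\lambda$ drops rank) can occur in $F^{-1}(0)$. Once one restricts to the open set $V$ where the rank condition automatically holds, this obstruction disappears and the argument reduces to a textbook application of the submersion theorem.
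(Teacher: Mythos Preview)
Your proof is correct and follows essentially the same route as the paper: both argue that $F$ is smooth with Jacobian $[F_\lambda~F_\tau]$ of full rank $6$ on $\bar{\mathcal{A}}$ and then invoke a level-set theorem (the paper cites the constant-rank level set theorem, Theorem~5.12 in Lee). Your version is in fact slightly more careful, explicitly restricting to the open set $V=\{\det F_\lambda\neq 0\}$ before applying the regular-value/submersion theorem, which cleanly handles the fact that the rank hypothesis is only imposed on $\bar{\mathcal{A}}$ rather than on all of $\mathbb{R}^{6+n}$.
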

\renewcommand\qedsymbol{$\square$}
\vspace{-2mm}
 \begin{proof}
    Clearly, $F$ is a smooth map. Since its Jacobian matrix $[F_\lambda~F_\tau]$ has constant rank $6$ over $\Bar{\mathcal{A}}$, $\Bar{\mathcal{A}}$ is a smooth $n$-dimensional submanifold of $\mathbb{R}^{6+n}$ by the constant-rank level set theorem (Theorem 5.12, \cite{lee_introduction_2012}). 
\end{proof}
\vspace{-2mm}
  
\begin{lemma}
The map $\Psi:~\Bar{\mathcal{A}}\rightarrow\Bar{\mathcal{C}}$ is a diffeomorphism.
\end{lemma}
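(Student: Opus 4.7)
The plan is to exhibit an explicit smooth two-sided inverse for $\Psi$. Smoothness of $\Psi$ itself is essentially already in hand: rewriting the canonical system (\ref{canonical_eqs}) in the state $(g,\Lambda)$ with $u=C^{-1}(\Lambda-\Lambda_{ad}(\tau))$, both the right-hand side of the resulting ODE and the initial datum $\Lambda(0)=\lambda+\Lambda_{ad}(\tau)$ are smooth in $(\lambda,\tau)$, so the smooth-dependence-on-data theorem quoted before Lemma~1 gives that $(\lambda,\tau)\mapsto(g(\cdot),u(\cdot))$ is smooth into the $C^\infty$ topology.

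To build the inverse $\Phi:\bar{\mathcal{C}}\to\bar{\mathcal{A}}$, I would first recover $\lambda$ by the linear evaluation $\lambda=Cu(0)$, which is immediate from the definition $\lambda:=C\epsilon(0)$. For $\tau$, integrate the linear auxiliary ODE $\Lambda^{\prime}=ad_{\xi_0+u}^{T}\Lambda-W(g)$ backward along the fixed curve from the endpoint value $\Lambda(l)=W_{+}(g(l))$ supplied by (\ref{B.C.}); because this is a linear ODE with smooth coefficients in $(g,u)$, the resulting $\Lambda(0)$ depends smoothly on $(g,u)$. Then (\ref{constitutive_law}) at $s=0$ yields $\Lambda_{ad}(\tau)=\Lambda(0)-Cu(0)$, and inverting the smooth actuation embedding $\tau\mapsto\Lambda_{ad}(\tau)$ (assumed injective with a smooth left inverse, as is standard for physically meaningful CR actuation models) recovers $\tau$ smoothly. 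Define $\Phi$ as this composite.

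It then remains to check that $\Phi$ and $\Psi$ are mutual inverses. The direction $\Phi\circ\Psi=\mathrm{id}_{\bar{\mathcal{A}}}$ is immediate from the constructions, since forward integration places $u(0)=C^{-1}\lambda$ and $\Lambda(0)=\lambda+\Lambda_{ad}(\tau)$, precisely the quantities $\Phi$ reads off. For $\Psi\circ\Phi=\mathrm{id}_{\bar{\mathcal{C}}}$, the key observation is that along every extremal $\Lambda-Cu$ is constant in $s$: differentiating (\ref{constitutive_law}) and substituting (\ref{canonical_eqs}) gives $\Lambda^{\prime}-Cu^{\prime}=0$, so the $\tau$ recovered by $\Phi$ reproduces the constitutive relation along the whole backbone, and ODE uniqueness from the initial data $(\lambda,\tau)$ returns the original $(g,u)$. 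The main obstacle will be the invertibility assumption on $\Lambda_{ad}$: a global smooth left inverse is required for a global diffeomorphism, otherwise the argument only produces local diffeomorphisms that must be glued via the smooth atlas on $\bar{\mathcal{A}}$ furnished by Lemma~1; a secondary delicacy is that $\bar{\mathcal{C}}$ lives inside $C^\infty([0,l],SE(3)\times U)$, so one must push the finite-dimensional smooth structure of $\bar{\mathcal{A}}$ forward along $\Psi$ and confirm compatibility with the function-space topology.
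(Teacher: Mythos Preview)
Your approach is essentially the paper's: recover $\lambda=Cu(0)$ by evaluation, integrate the $\Lambda$-equation backward from the tip boundary condition $\Lambda(l)=W_+(g(l))$ to obtain $\Lambda(0)$, and then invert the constitutive law (\ref{constitutive_law}) to extract $\tau$. You are in fact more explicit than the paper about verifying the two-sided inverse and about the needed hypothesis that $\tau\mapsto\Lambda_{ad}(\tau)$ admits a smooth left inverse, which the paper invokes tacitly; one minor quibble is that your justification ``differentiating (\ref{constitutive_law}) and substituting (\ref{canonical_eqs}) gives $\Lambda'-Cu'=0$'' really just amounts to the standing assumption that $\Lambda_{ad}$ is independent of $s$, not to anything coming from (\ref{canonical_eqs}).
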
\label{homeo}
\begin{proof}
By construction, $\Psi$ is well-defined, smooth, and surjective. It remains to show that $\Psi$ is injective and has a smooth inverse. First note that $\lambda$ and $g(l)$ uniquely and smoothly depend on $(g,u)$, and the boundary value $\Lambda(l)$ is uniquely and smoothly determined by $g$ through (\ref{B.C.}). We then can define a new IVP over the interval $[l,0]$ that starts from $s=l$ and propagates back in $s$, with known initial values $(g(l),\Lambda(l))$ and a system of ODEs (\ref{canonical_eqs}) parameterized by $u$. Since these ODEs are smooth in $(g,\Lambda)$ and $u$, by the existence and uniqueness theorem of IVP and Theorem 4.1 in chapter 5 of \cite{hartman2002ode}, the solution $(g,\Lambda)$ to this problem depends uniquely and smoothly on $(g(l),\Lambda(l))$ and $u$. Therefore, $\Lambda(0)$ is uniquely and smoothly determined by $(g,u)$, and by (\ref{constitutive_law}) we conclude that $\tau$ is also uniquely and smoothly determined by $(g,u)$. Thus, we have proved that $\Psi$ is a smooth bijection and $\Psi^{-1}$ is also smooth. 
\end{proof}

\begin{theorem}
    $\Psi(\Bar{\mathcal{A}}) \subset \mathcal{C}$ is a smooth $n$-manifold.
\end{theorem}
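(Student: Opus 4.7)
The plan is to observe that the theorem is essentially a direct corollary of Lemmas 1 and 2, combined with the fact recalled in the differential geometry preliminaries that a diffeomorphism transports a smooth structure from its domain to its codomain. Since the heavy lifting (regularity of the level set for $\Bar{\mathcal{A}}$ and bijectivity of $\Psi$ with smooth inverse) has already been done, what remains is to make the transport of the smooth structure explicit.

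First I would invoke Lemma 1 to equip $\Bar{\mathcal{A}}$ with the smooth $n$-manifold structure it inherits as a regular level set in $\mathbb{R}^{6+n}$, fixing some smooth atlas $\{(U_i,\varphi_i)\}$. Next, by Lemma 2, $\Psi$ is a diffeomorphism onto $\Psi(\Bar{\mathcal{A}})$, so pushing the atlas forward along $\Psi$ yields the collection $\{(\Psi(U_i),\varphi_i\circ\Psi^{-1})\}$. On any overlap, the transition map reduces to $(\varphi_j\circ\Psi^{-1})\circ(\varphi_i\circ\Psi^{-1})^{-1}=\varphi_j\circ\varphi_i^{-1}$, which is smooth by construction of the original atlas; hence this collection is a smooth atlas and endows $\Psi(\Bar{\mathcal{A}})$ with a smooth $n$-manifold structure. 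Since $\Psi(\Bar{\mathcal{A}})\subset\mathcal{C}$ by the definition of $\Psi$, the theorem follows.

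The only point that merits care, and which I view as the main obstacle, is that $\mathcal{C}$ sits inside the infinite-dimensional space $C^{\infty}([0,l],SE(3)\times U)$ under the smooth topology, so there is no ambient Euclidean submanifold theorem to invoke on $\Psi(\Bar{\mathcal{A}})$ directly. However, nothing in the argument above uses any ambient structure on the codomain: the pushed-forward atlas provides an intrinsic smooth structure on $\Psi(\Bar{\mathcal{A}})$, and the continuity of $\Psi$ and $\Psi^{-1}$ established within the proof of Lemma 2 ensures that the resulting manifold topology is consistent with the subspace topology inherited from $C^{\infty}([0,l],SE(3)\times U)$. I would flag this explicitly to preempt the concern, but it does not obstruct the conclusion, and the theorem follows from a single application of the smooth-structure-transport principle.
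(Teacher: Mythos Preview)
Your proposal is correct and mirrors the paper's own proof: combine Lemma~1 (smooth $n$-manifold structure on $\Bar{\mathcal{A}}$) with Lemma~2 ($\Psi$ a diffeomorphism) and transport the smooth structure via the pushed-forward atlas, exactly as recalled in the preliminaries. Your additional remark about the infinite-dimensional ambient space is a welcome clarification the paper leaves implicit, but the core argument is the same.
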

\begin{proof}
By Lemma 1, $\Psi:~\mathcal{A}\rightarrow\mathcal{C}$ is a diffeomorphism, which preserves the differential structure of the smooth manifold $\Bar{\mathcal{A}}\subset\mathcal{A}$, hence $\Psi(\Bar{\mathcal{A}})$ is a smooth manifold.
\end{proof} 

The result that the configuration space is a manifold of finite dimensions is similar to the main result in \cite{bretl2014kirchhoff}. However, unlike \cite{bretl2014kirchhoff}, $\Bar{\mathcal{A}}$ is not an open subset of the Euclidean space, but an implicitly defined closed submanifold, and there is not a single global chart. Therefore, applying Euclidean space planning methods is not appropriate for our problem.
We also note that, in the proof of Lemma 2, the bijectivity of $\Psi:~\mathcal{A}\rightarrow\mathcal{C}$ only relies on the existence and uniqueness theorem of IVP and does not require $\mathcal{A}$ to be a smooth manifold. This means that marginally stable configurations can still be explored in planning by sampling on $\mathcal{A}$.

\section{Planning on Implicit Manifold} \label{sec_planning}

The result obtained in the last section naturally suggests using planning methods that work on manifolds. A simple algorithm would be planning in the ambient Euclidean space and projecting the path to the manifold. More advanced manifold planning algorithms have also been developed including both optimization-based \cite{bordalba2023manifold_collocation} and sampling-based \cite{jaillet2012atlasRRTstar,jaillet2013AtlasRRT,kingston2019IMACS} methods. We employ a modified AtlasRRT* \cite{jaillet2012atlasRRTstar} as a demonstration of the potential to apply these methods. 

The AtlasRRT* has a similar algorithm structure to the original RRT* \cite{karaman2011optimal}. The difference is that, apart from the tree, AtlasRRT* also maintains a collection of local charts that approximates an atlas of the manifold, and the sampling and steering methods are modified based on the atlas structure. The atlas is a collection of tangent spaces of the implicit manifold acting as local charts. Each local chart has a maximum valid radius $R$ such that the tangent space approximates the manifold well within this radius. 

Consider a chart whose origin is at $x_i = (\lambda_i,\tau_i) \in \Bar{\mathcal{A}}$, we construct it as a subset of $T_{x_i}\Bar{\mathcal{A}}$ with a parameterization $x_j = \psi_i(y_j^i)$ where $y_j^i \in T_{x_i}\Bar{\mathcal{A}}$, as shown in Fig. \ref{fig.atlas}. Since $T_{x_i}\Bar{\mathcal{A}}$ is a subspace of the ambient space $\mathbb{R}^6\times\mathbb{R}^n$, we can assign to it a $(6+n)\times n$ basis $\Phi_i$ that is expressed in the ambient space and belongs to the kernel of the Jacobian matrix $[F_{\lambda}~F_{\tau}]$. In \cite{jaillet2012atlasRRTstar}, a set of orthonormal bases are used. However, since we are not sampling only in the actuation space, the following unorthogonal basis is employed to facilitate exploration:
\vspace{-1mm}
\begin{equation}
    \Phi_i = 
    \begin{bmatrix}
        -F_{\lambda}^{-1}F_{\tau} \\ I_{n\times n}
    \end{bmatrix}
    \vspace{-1mm}
\end{equation}
We can then obtain the coordinates of the tangent space elements in the ambient Euclidean space
\begin{equation}
    x_j^i = \varphi_i(y_j^i) = x_i + \Phi_iy_j^i
\end{equation}
The manifold parameterization $\psi_i$ is then obtained by solving
\vspace{-2mm}
\begin{equation}
    F(x_j) = 0
    \vspace{-1mm}
\end{equation}
using direct shooting from the initial guess $x_j^i$. Note that this is exactly solving the BVP of CR mechanics. Unlike the orthogonal projection used in \cite{jaillet2012atlasRRTstar}, our parameterization removed the orthogonal constraints to reduce the computational load induced by the BVP.

\begin{figure}[t!]
\vspace{2mm}
    \centering
    \includegraphics[width = 0.5\linewidth]{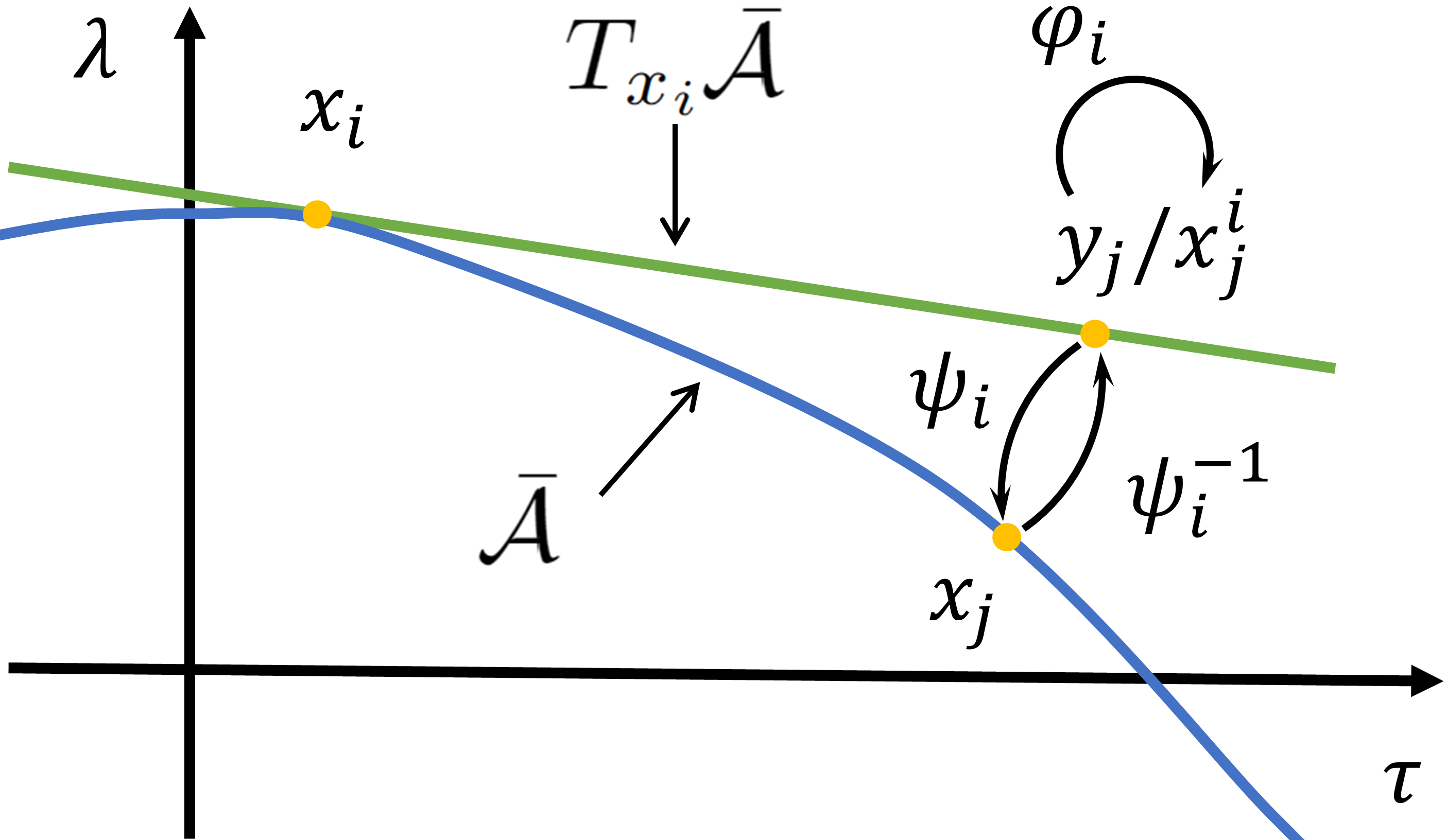}
    \vspace{-3mm}
    \caption{Illustration of the implicit manifold $\Bar{\mathcal{A}}$ and a tangent space in the ambient space.}
    \label{fig.atlas}
    \vspace{-6mm}
\end{figure}

To start the RRT, the atlas is initialized to a single chart with the starting configuration as the origin, and new charts are added as more samples are obtained. To sample a new configuration, an existing chart in the atlas is first selected according to the probability as follows
\vspace{-1mm}
\begin{equation}
    p_i = \frac{(\mathrm{max}(N) - N_i)^2}{\sum_i(\mathrm{max}(N) - N_i)^2}
    \vspace{-1mm}
\end{equation}
where $N_i$ is the number of times that chart $i$ was sampled. Suppose $x_c$ is the origin of the selected chart $c$, a random vector $y_{rand}$ is then generated with its length adjusted to
\vspace{-1mm}
\begin{equation}\label{edge_length}
    \|\Phi_c y_{rand}\| = \delta\beta R
    \vspace{-1mm}
\end{equation}
where $\delta$ is a random number in $[0.5,1]$ and $\beta$ is a constant number greater than $1$ to encourage exploration. The algorithm then finds the nearest node in the tree to the random sample $x_{rand} = \varphi_c(y_{rand})$, called $x_{near}$. If $x_{near}$ is in a different chart $c^\prime$, then $x_{rand}$ is orthogonally projected to this chart by $y_{rand}^{c^\prime} = (\Phi_{c^\prime}^T\Phi_{c^\prime})^{-1}\Phi_{c^\prime}^T(x_{rand} - x_{near})$ and the distance between $x_{rand}$ and $x_{near}$ is adjusted again to $\delta\beta R$. Then $x_{rand}$ is projected to the manifold $\mathcal{A}$ by $x_{new} = \psi_c(y_{rand})$, and $x_{new}$ is added to the RRT if it passes the collision check. However, before adding $x_{new}$ to the atlas, we need to determine if it is either in the current chart, in another chart, or not covered by an existing chart. To check whether a configuration is in $c$, we employ two criteria, namely, the distance from the origin of $c$ to $y$
\vspace{-1mm}
\begin{equation}\label{criterion1}
    \|\Phi_c y\| \leq R
\end{equation}
and the distance between the $y$ in the tangent space and its projection $x$ in the manifold
\begin{equation}\label{criterion2}
    \|\varphi_c(y) - x\| \leq \varepsilon
\end{equation}
The parameters $R$ and $\varepsilon$ are set to appropriately small values to ensure a good approximation of the manifold by the chart. Note that the norm used in (\ref{edge_length})-(\ref{criterion2}) is not Euclidean, since $(\lambda,\tau)$ contains values of different metrics. We calculate the norm by a robot-specific diagonal distance metric matrix $M$ to balance the weight of different values:
\vspace{-1.5mm}
\begin{equation}
    \|x\| = \sqrt{x^TMx}
    \vspace{-2mm}
\end{equation}

To reduce overlaps between different charts, a set of linear inequalities is defined for each chart
\vspace{-1.5mm}
\begin{equation}\label{inequalities}
    2y^Ty_j^c \leq (y_j^c)^Ty_j^c
    \vspace{-2mm}
\end{equation}
where $j$ is for all neighboring charts to chart $c$, and $y_j^c = (\Phi_c^T\Phi_c)^{-1}\Phi_c^T(x_j - x_c)$ is the origin of chart $j$ orthogonally projected onto chart $c$. These constraints render the chart as a convex set. 
After generating a new configuration $x_{new}$, it is checked using conditions (\ref{criterion1}),  (\ref{criterion2}), and (\ref{inequalities}) to see if it is covered by the current chart or neighboring charts. If it is not covered, a new chart is generated with $x_n$ as the origin. The new chart then recognizes all other charts whose origin is within a ball of radius $2R$ centered at $x_n$ as the neighboring charts, and updates the conditions (\ref{inequalities}) accordingly.

The implementation of AtlasRRT* is the same to the original RRT* \cite{karaman2011optimal} except the functions SAMPLE and STEER, which haven been described above. Algorithm 1 summarizes the implementation of the STEER function.

\begin{algorithm}[t]
\SetAlgoLined
\DontPrintSemicolon
\caption{The STEER function}\label{alg:steer}
\KwIn{Atlas $A$, $x_{near}$, $x_{rand}$}
\KwOut{$x_n$}
$c \gets \text{ChartIndex}(x_{rand})$\;
$c^\prime \gets \text{ChartIndex}(x_{near})$\;
\If{$c \neq c^\prime$}{
    $c \gets c^\prime$\;
    $y_{rand} \gets (\Phi_c^T\Phi_c)^{-1}\Phi_c^T(x_{rand} - x_c)$\;
    $x_{rand} \gets x_{near} + \delta\beta R\frac{\varphi_c(y_{rand})-x_{near}}{\|\varphi_c(y_{rand})-x_{near}\|}$\; 
}
$x_{new},~\text{Converge} \gets \text{Solve}(F(x)=0,~x_{rand})$\;
\If{$\text{Converge}$ \textbf{and} $\text{CollisionFree}(x_{near},~x_{new})$}{
    $y_{new} \gets \psi_c^{-1}(x_{new})$\;
    $c_{neighbor} \gets \emptyset$\;
    \If{$2y_{new}^Ty_j^c > (y_j^c)^Ty_j^c$}{
        $c_{neighbor} \gets \text{NeighborChart}(c,~x_{new},~y_{new})$\;
    }
    \eIf{$\text{IsEmpty}(c_{neighbor})$}{
        \If{$\|\Phi_c y_{new}\| \leq R$ \textbf{and} $\|\varphi_c(y_{new}) - x_{new}\| \leq \varepsilon$}{
            $c \gets \text{NewChart}(A,~x_{new})$\;
        }
    }{
        $c \gets c_{neighbor}$
    }
    $\text{AddToAtlas}(A,~c,~x_{new},~y_{new})$
}
\end{algorithm}
\setlength{\textfloatsep}{5pt}

\section{Results and Discussions} \label{sec_results}

In this section, we present simulation results for performance evaluation of the AtlasRRT* and compare it to other RRT* algorithms that sample in the ambient space. We modeled a single-segment tendon-driven CR with 4 tendons placed $90^\circ$ apart around the backbone, which has a similar design in \cite{xiao2023kinematics}. The routing of the tendons are parallel to the centerline of the backbone. Each $180^\circ$ opposing tendon pair is driven differentially  such that when one tendon is pulled the other has zero tension, generating 2 DoFs of bending motion actuated by tendon tensions. The robot also possesses 1 DoF of elongation/shortening of the backbone, resulting in 3 DoFs in total. 
The backbone is a 1 mm radius rod with $E = 50$ GPa and $G = 20$ GPa, and the tendons are 15mm away from the backbone. The maximum tendon tension is 70 N and the robot length is between 25 mm and 100 mm.
To solve the BVP using the shooting method, we used the Runge-Kutta method in the Matlab function \texttt{ode45()} for the forward integration of (\ref{canonical_eqs}) and the Levenberg-Marquardt method in \texttt{fsolve()} to find the unknown initial value $\lambda$.

For comparison, we implemented two different variants of the original RRT* algorithm. The first one, named RRT*-$\tau$, samples in $\mathbb{R}^3$ for $\tau$ and uses the $\lambda$ of the starting configuration as the initial guess of the BVP. The second one, named RRT*-$(\lambda,\tau)$, samples in $\mathbb{R}^6\times\mathbb{R}^3$ for both $\lambda$ and $\tau$, and uses them to solve the BVP. Based on simulation trials, we defined the distance metric as $M = \mathrm{diag}([10^2~10^2~10^2~0.8~0.8~0.8~1~1~10^4])$ for $(\lambda,\tau)$ in SI units. The AtlasRRT* used parameters: $R = 10$, $\varepsilon = 5$, and $\beta = 5$. The tree extension distance was 20 for RRT*-$(\lambda,\tau)$ and 7 for RRT*-$\tau$ (without $\lambda$ component). The cost was the Euclidean distance of the robot tip path.

\begin{figure*}[t!]
\vspace{2mm}
    \centering
    \includegraphics[width = 1\linewidth]{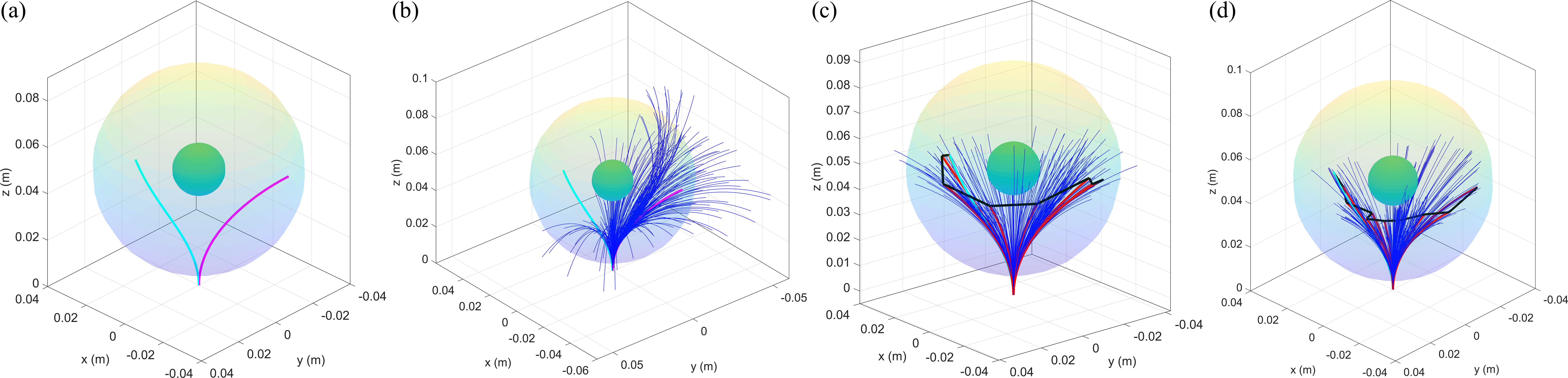}
    \vspace{-5mm}
    \caption{Results for scenario 1. The magenta and cyan curves represent the start and target configurations, respectively. The blue curves are sampled configurations, while the red curves are samples contained in the path. The solid shapes represent obstacles and the transparent shapes represent the potential field. (a) Scenario setup; (b) RRT*-$\tau$; (c) RRT*-$(\lambda,\tau)$; (d) AtlasRRT*}
    \label{fig.result-1}
    \vspace{-1mm}
\end{figure*}
\begin{figure*}[t!]
    \centering
    \includegraphics[width = 1\linewidth]{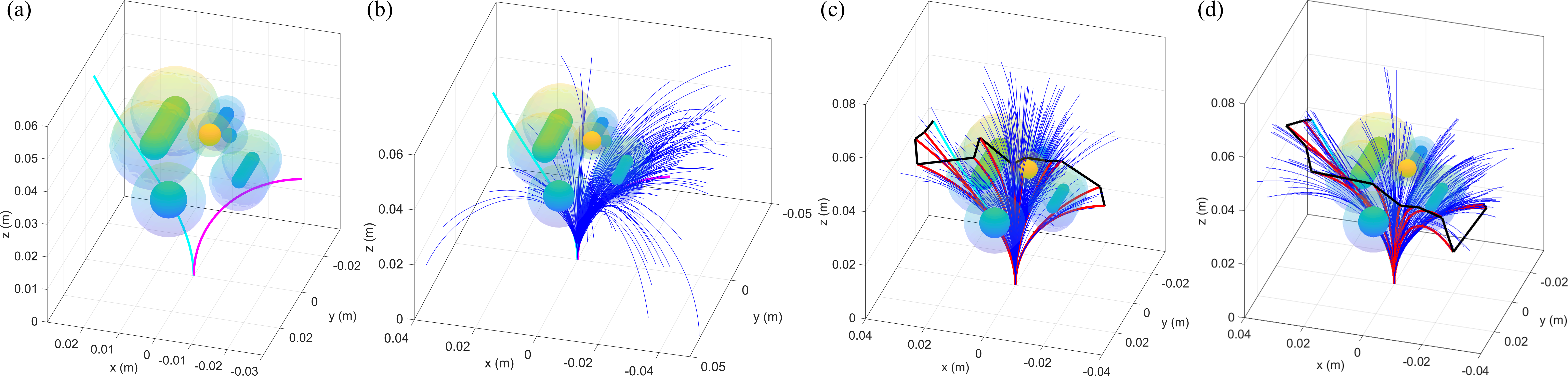}
    \vspace{-4mm}
    \caption{Results for scenario 2. The color scheme follows that of Fig. \ref{fig.result-1}. (a) Scenario setup; (b) RRT*-$\tau$; (c) RRT*-$(\lambda,\tau)$; (d) AtlasRRT*}
    \label{fig.result-2}
    \vspace{-6mm}
\end{figure*}

Two scenarios were designed to test the algorithms. In the first scenario, the environment only contains a ball centered above the robot base as the obstacle. A spherical potential field is generated concentrically to the ball to simulate an elastic ball object. The potential field is large enough to ensure the robot is always in contact with the ball. The start and target configurations of the robot have the same actuation values $\tau$ but different $\lambda$. This simple scenario aims to clearly demonstrate the difference between the algorithms. In the second scenario, the environment contains multiple ball- and capsule-shaped objects with potential fields. The start and target configurations are deflected by different objects so that the robot needs to explore contact to find a path. All algorithms generated 300 nodes in the tree before timeout. In both scenarios, each algorithm was tested 5 times with different random seeds. All simulations were implemented in Matlab and run on an 8-core 2.3 GHz processor.

The simulation results are presented in Table. \ref{table_result}, including number of samples in the tree before finding a path, path cost, and computation time. In scenario 1, AtlasRRT* and RRT*-$(\lambda,\tau)$ both found a path, while RRT*-$\tau$ could not find a path before timeout. The total computation time by AtlasRRT* was 50\% less than RRT*-$(\lambda,\tau)$ and 66\% less than RRT*-$\tau$. 
A comparison between the sampled configurations of the three methods is shown in Fig. \ref{fig.result-1}. We observe that the sampled configurations by RRT*-$\tau$ are mostly concentrated on one side of the spherical potential field due to the fixed $\lambda$. Since the start and target configurations have the same $\tau$ value, samples close to the target in the $\tau$-space would have configurations close to the start. On the other hand, RRT*-$(\lambda,\tau)$ and AtlasRRT* explored a larger workspace and both found a path within reasonable numbers of explored configurations. However, RRT*-$(\lambda,\tau)$ required significantly more computation time, as it samples across the entire $\mathbb{R}^6\times\mathbb{R}^3$, with many samples not close to $\mathcal{A}$, often precluding convergence or requiring more iterations to converge to $\mathcal{A}$.

Similar results can be observed in scenario 2, where AtlasRRT* and RRT*-$(\lambda,\tau)$ both found a path while RRT*-$\tau$ failed again before timeout. The total computation time by AtlasRRT* was 48\% less than RRT*-$(\lambda,\tau)$ and 59\% less than RRT*-$\tau$. As shown in Fig. \ref{fig.result-2}, AtlasRRT* and RRT*-$(\lambda,\tau)$ achieved better exploration than RRT*-$\tau$. Compared to scenario 1, the computation time to sample the same amount of configurations increased by 33\% for RRT*-$\tau$, 52\% for RRT*-$(\lambda,\tau)$, and 18\% for AtlasRRT*. Both RRT*-$\tau$ and RRT*-$(\lambda,\tau)$ suffered a greater increase in computation time than AtlasRRT*. This is due to the complexity of the potential field, which reduces the likelihood of convergence using the shooting method given a bad initial guess.

\begin{table}[t!]

\caption{Averaged Simulation Results}
\label{table_result}
\vspace{-4mm}
\begin{center}
\begin{tabular}{|c|c|p{1.2cm}|p{1.2cm}|p{1.2cm}|}
\hline
\multicolumn{2}{|c|}{scenario} & {samples} & {cost (mm)} & {time (s)}\\
\hline
\multirow{3}{*}{1} & RRT*-$\tau$ & - & - & 2242.9\\ 
\cline{2-5}
& RRT*-$(\lambda,\tau)$ & 49.6 & 106.6 & 1514.4\\
\cline{2-5}
& AtlasRRT* & 54.6 & 93.2 & 763.1\\
\hline
\multirow{3}{*}{2} & RRT*-$\tau$ & - & - & 2974.7 \\ 
\cline{2-5}
& RRT*-$(\lambda,\tau)$ & 127.2 & 125.4 & 2305.7\\ 
\cline{2-5}
& AtlasRRT* & 92.6 & 153.4 & 897.7\\ 
\hline
\end{tabular}
\end{center}
\vspace{-2.5mm}
\end{table}

We remark that bidirectional RRT can potentially improve the performance of the above algorithms. However, the target configuration is often hard to obtain for a CR in complicated environments, making bidirectional sampling less feasible. We also note that simulating the interaction of a CR with rigid bodies is possible by defining potential fields with large gradients to approximate rigid contacts. This work is also subjected to limitations to be addressed in future works. The potential field used in this work is invariable and does not capture the deformation of the elastic object in contact, making the robot configuration less likely to be stable. The shooting method used for solving robot configurations is slow and convergence is difficult, especially within complicated potential fields, resulting in significantly long computation time and less optimized paths.
\vspace{-2.5mm}

\section{CONCLUSIONS} \label{sec_conclusions}
\vspace{-1.5mm}
This paper presents a method for CR path planning with contact between the robot and elastic objects. The objects are modeled as potential fields that exert distributed forces on the robot. An analysis of the robot mechanics shows that the stable configurations of the robot are characterized by an implicit smooth manifold. A manifold planning algorithm named AtlasRRT* is then employed to solve the path planning problem. Simulations in different scenarios show that AtlasRRT* outperforms Euclidean space RRT* in terms of computational efficiency. 
Future works include using a more realistic mechanics model for elastic objects and methods such as collocation to rapidly solve robot mechanics. In addition, the proposed algorithm will be validated in real-world experiments, such as catheter-based cardiac ablations \cite{alipour2019mri} and concentric tube based hemorrhage removal \cite{gunderman2023non}.






\bibliographystyle{ieeetr}
\bibliography{references}

\end{document}